\documentclass[11pt]{article}

\usepackage[margin=1in]{geometry}
\usepackage{amsmath,amssymb,amsthm}
\usepackage{booktabs}
\usepackage{enumitem}
\usepackage{graphicx}
\usepackage{hyperref}
\usepackage{natbib}

\newtheorem{theorem}{Theorem}
\newtheorem{lemma}{Lemma}
\newtheorem{assumption}{Assumption}
\newtheorem{definition}{Definition}
\newtheorem{proposition}{Proposition}
\newtheorem{corollary}{Corollary}
\newtheorem{remark}{Remark}

\title{PromptShift-CRC: Drift-Aware Conformal Risk Control for Foundation Models Under Prompt and Domain Shift}
\author{
Jeffery Opoku\\
The University of Texas Rio Grande Valley, Edinburg, TX, USA\\
\texttt{jeffery.opoku01@utrgv.edu}
\and
David Banahene\\
Florida International University, Miami, FL, USA\\
\texttt{abanahene54@gmail.com}
}
\date{}

\begin{document}

\maketitle

\begin{abstract}
Foundation models are now used in settings where the prompts they receive can change quickly. Users change, topics change, policies change, and the model may suddenly face a kind of request that was rare in the calibration data. This makes fixed calibration risky. Conformal prediction and conformal risk control give model-agnostic ways to control error, but they work best when the calibration data still look like the future data. This paper develops PromptShift-CRC, a drift-aware conformal risk control method for foundation-model outputs under prompt and domain shift. The method embeds prompts and responses, measures how far the current prompt stream has moved from the calibration pool, gives more weight to relevant or recent calibration examples, and updates the risk level online after observed violations. It reports three practical diagnostics: realized risk error, prompt drift, and effective calibration size. We give conditions under which the method controls risk up to terms for distribution mismatch and weighted quantile uncertainty. In a synthetic prompt-shift benchmark, static conformal risk control fails sharply after drift, while PromptShift-CRC gives the best coverage among the adaptive baselines considered. We then evaluate the same calibration layer on public benchmark-derived streams for question answering, toxicity, summarization factuality, and long-context hallucination risk.
\end{abstract}

\noindent\textbf{Keywords and phrases:} conformal risk control; foundation models; large language models; distribution shift; prompt drift; uncertainty quantification.

\medskip
\noindent\textbf{2020 Mathematics Subject Classification:} Primary 62G15; secondary 62L12, 68T50, 62M10.

\section{Introduction}

Large language models and other foundation models are used in open-ended environments. A model that works well on one group of prompts may later face new users, new topics, new policies, new time periods, or new domains. In that setting, uncertainty quantification cannot rely only on a static validation set. A calibration set collected yesterday may not describe the prompts the model sees tomorrow.

Conformal prediction has become a central tool for distribution-free uncertainty quantification. Conformal risk control extends this idea from ordinary coverage to broader risks chosen by the user, such as factuality errors, toxicity violations, wrong answers, or unsafe responses. These tools are attractive because they can wrap around complicated black-box models. The weakness is also clear: if the calibration data no longer match the current prompt stream, the stated risk level may no longer match the risk we actually see.

We study this problem as prompt and domain drift. Prompt drift means that the model inputs change in wording, topic, intent, difficulty, or user population. Domain drift means that the task environment changes more broadly, for example from general questions to medical, legal, financial, or scientific prompts. Both kinds of drift can change how model confidence, generated content, and downstream error relate to one another.

This is a timely problem because many current uses of language models are no longer static benchmarks. They are live systems that see changing user behavior, new tasks, and shifting safety requirements.

We propose PromptShift-CRC, a drift-aware conformal risk control method for foundation models. It has four parts. First, it represents prompts and outputs with embeddings or other features. Second, it computes a drift score comparing the current prompt window with the weighted calibration data. Third, it gives more weight to calibration examples that are relevant or recent when drift appears. Fourth, it updates the risk threshold online after observed violations. The method is built around a practical question: when can a model still trust its old calibration data, and when should it treat the current prompt stream as a new regime?

\paragraph{Contributions.}
This paper makes four contributions.
\begin{enumerate}[label=(\roman*)]
    \item We formalize prompt drift and domain drift for conformal risk control with foundation models.
    \item We introduce PromptShift-CRC, a drift-aware conformal risk control method based on embedding similarity, transport drift, effective calibration size, and online risk adaptation.
    \item We derive a risk-control bound that separates realized risk error, distributional drift, residual risk mismatch, and weighted calibration uncertainty.
    \item We evaluate the method on controlled prototypes and public benchmark-derived streams where prompt/domain shift is natural: question answering, summarization, toxicity detection, and hallucination-sensitive generation.
\end{enumerate}

\section{Related Work}

The paper connects four areas: conformal risk control, foundation-model uncertainty, hallucination and safety evaluation, and prompt/domain shift.

\paragraph{Conformal prediction and conformal risk control.}
Conformal prediction provides distribution-free prediction sets under exchangeability and has become a standard framework for uncertainty quantification \citep{vovk2005algorithmic,angelopoulos2021gentle}. Conformal risk control extends this logic from ordinary coverage to broader user-specified risks, such as false discovery, classification error, or unsafe output rates \citep{angelopoulos2024conformal}. This extension is especially relevant for foundation models because the quantity to control is often not a prediction-set error but a task-specific violation: an untruthful answer, a toxic continuation, a factuality failure, or a response that should have been escalated.

The main limitation is that conformal guarantees are clearest when calibration examples and future examples are exchangeable or at least representative. Weighted conformal prediction addresses covariate shift by changing the calibration distribution through importance weights \citep{tibshirani2019conformal}. Adaptive conformal inference adjusts the target level online after observing errors \citep{gibbs2021adaptive}. PromptShift-CRC combines these two ideas but adds an explicit drift diagnostic for prompt and response representations. This diagnostic is meant to detect when the calibration evidence itself has become stale.

\paragraph{Uncertainty and risk for large language models.}
Recent work has begun to adapt conformal and risk-control methods to language generation. Conformal approaches have been studied for multiple-choice question answering with large language models \citep{kumar2023conformal}, natural language generation uncertainty \citep{wang2025copu}, and generative-model uncertainty beyond previously seen outputs \citep{noorani2025missingmass}. These papers show that conformal ideas can be useful for language systems, but they also highlight a harder issue: the space of possible prompts and responses is open-ended, and calibration data may stop representing the current use case.

PromptShift-CRC focuses directly on this deployment problem. It does not assume that the calibration prompt distribution stays fixed. Instead, it treats the calibration pool as something that must be checked over time. The method asks a simple question: do the examples supporting the current conformal threshold still look like the prompts arriving now?

\paragraph{Truthfulness, hallucination, and safety benchmarks.}
Several public benchmarks make it possible to study foundation-model risks in a concrete way. TruthfulQA measures whether models generate truthful answers to questions designed around common human falsehoods \citep{lin2021truthfulqa}. RealToxicityPrompts evaluates toxic degeneration from naturally occurring prompts \citep{gehman2020realtoxicityprompts}. SummEval provides human judgments of machine-generated summaries, including consistency scores that are useful for studying factuality-sensitive summarization risk \citep{fabbri2020summeval}. SelfCheckGPT detects hallucination by comparing multiple sampled responses from a black-box language model \citep{manakul2023selfcheckgpt}, while SAC3 studies semantic-aware cross-check consistency for hallucination detection \citep{zhang2023sac3}. G-Eval uses large language models as evaluators for natural language generation tasks such as summarization and dialogue \citep{liu2023geval}. LongHalQA provides a long-context hallucination benchmark with discrimination and completion tasks \citep{qiu2024longhalqa}.

These benchmarks motivate the risk scores used here. In question answering, risk may mean an untruthful or wrong answer. In summarization, it may mean factual inconsistency. In safety screening, it may mean toxicity. In long-context generation, it may mean hallucination. PromptShift-CRC is designed as a calibration layer that can sit above any of these evaluators.

\paragraph{Prompt drift and model behavior under shift.}
The most direct motivation for this paper is the recent evidence that prompt distributions and model behavior can shift in deployed environments. Work on code distribution shifts shows that uncertainty behavior can vary sharply across shifted programming tasks \citep{li2024codeuncertainty}. Prompt-based semantic shift studies show that semantically equivalent rewordings can induce model-level response instability \citep{li2025pbss}. Non-deterministic drift studies report that repeated runs can vary even under fixed prompting conditions \citep{nicholson2026nondeterministic}. Most directly, recent work on natural prompt shift reports that changes in user prompts over time, user groups, and geography can be associated with large performance drops in deployed models \citep{seegmiller2026lens}.

Taken together, this literature suggests that static calibration is not enough for long-running foundation-model systems. PromptShift-CRC turns prompt drift into a statistical quantity that can be measured, monitored, and used inside conformal risk control.

\section{Problem Setup}

Let $X_t$ denote the prompt or input at time $t$, and let $A_t=M(X_t)$ be the response generated by a foundation model $M$. Let
\[
    Z_t=(X_t,A_t)
\]
denote the prompt-response pair. A task-specific loss or risk score is written as
\[
    R_t = \ell(Z_t,Y_t)\in[0,1],
\]
where $Y_t$ may be a reference answer, human judgment, automated verifier output, retrieval audit, or external evaluation signal. Large values of $R_t$ indicate greater risk. For example, $R_t$ may encode incorrectness in question answering, factual inconsistency in summarization, toxicity violation, failure to abstain, or hallucination severity.

The aim is to construct a data-dependent threshold $q_t$ before observing $Y_t$ such that
\[
    \Pr\{R_t>q_t\}\leq \alpha
\]
or, more generally, such that the realized risk violation process
\[
    E_t(q_t)=\mathbf 1\{R_t>q_t\}
\]
is controlled around a user-specified level $\alpha\in(0,1)$, despite changes in the prompt distribution. In deployment, $q_t$ can be used to decide whether the model should answer directly, abstain, retrieve more evidence, ask for clarification, route to a stronger model, or request human review.

\begin{table}[t]
\centering
\caption{Main notation.}
\label{tab:notation}
\begin{tabular}{lp{0.58\textwidth}}
\toprule
Symbol & Meaning \\
\midrule
$X_t$ & Prompt or input at time $t$ \\
$A_t=M(X_t)$ & Foundation-model response \\
$Z_t=(X_t,A_t)$ & Prompt-response pair \\
$Y_t$ & Reference answer, verifier signal, or human evaluation \\
$R_t=\ell(Z_t,Y_t)$ & Risk score, scaled to $[0,1]$ \\
$\alpha$ & Target risk violation level \\
$q_t$ & Risk threshold used before observing $Y_t$ \\
$\mathcal C_t$ & Calibration pool available before time $t$ \\
$\mathcal W_t$ & Recent window used to represent the current prompt stream \\
$\Phi(Z_t)$ & Representation of prompt-response pair \\
$w_{i,t}$ & Calibration weight assigned to example $i$ at time $t$ \\
$D_t$ & Drift score between current and preliminary calibration distributions \\
$n_{\mathrm{eff},t}$ & Effective calibration size, $1/\sum_i w_{i,t}^2$ \\
\bottomrule
\end{tabular}
\end{table}

\section{Prompt and Domain Drift}

Let $\Phi(Z_t)$ be a representation of the prompt-response pair. This representation may be built from sentence embeddings, hidden states, verifier features, retrieval scores, uncertainty scores, or prompt metadata. Given a current window $\mathcal W_t$ and a calibration pool $\mathcal C_t$, start with nonnegative preliminary similarity weights $s_{i,t}$ satisfying $\sum_{i\in\mathcal C_t}s_{i,t}=1$. Define empirical distributions
\[
    \widehat P_t = \frac{1}{|\mathcal W_t|}\sum_{i\in \mathcal W_t}\delta_{\Phi(Z_i)}
    \quad \text{and} \quad
    \widehat Q_t^{(0)} = \sum_{i\in \mathcal C_t} s_{i,t}\delta_{\Phi(Z_i)}.
\]
The drift score is
\[
    D_t = \mathcal W_p(\widehat P_t,\widehat Q_t^{(0)}),
\]
where $\mathcal W_p$ is a Wasserstein or entropic transport discrepancy. Large $D_t$ indicates that the current prompt-response distribution is poorly represented by the representation-weighted calibration pool before any recent-window fallback is applied.

This definition separates two ideas that are often blurred in practice. Similarity weighting asks which old calibration points look relevant to the present. Drift monitoring asks whether that similarity-weighted pool still represents the present well enough to support a risk statement. The final calibration weights are then allowed to react to the drift score.

\section{Drift-Aware Conformal Risk Control}

At each time $t$, the method computes calibration weights using representation similarity, measures drift between the current window and weighted calibration distribution, tempers the calibration pool when drift is elevated, and chooses a threshold for controlling the target risk. After the true loss or verifier signal is observed, the method updates the target risk parameter online.

For each calibration point $i\in\mathcal C_t$, let
\[
    d_{i,t}=\|\Phi(Z_i)-\bar \Phi_t\|_2,
    \qquad
    \bar \Phi_t=\frac{1}{|\mathcal W_t|}\sum_{j\in\mathcal W_t}\Phi(Z_j),
\]
where $\bar\Phi_t$ summarizes the current prompt-response window. Preliminary representation weights are
\[
    \widetilde w_{i,t}
    =
    \exp\{-d_{i,t}^2/h_t^2\},
\]
where $h_t>0$ is a bandwidth. Normalize them as
\[
    s_{i,t}
    =
    \frac{\widetilde w_{i,t}}{\sum_{j\in\mathcal C_t}\widetilde w_{j,t}}.
\]
The drift score $D_t$ is computed from these preliminary weights as in the previous section. A drift gate then tempers the final calibration weights:
\[
    g_t(D_t)=\exp\{-\rho D_t\}, \qquad \rho\geq 0.
\]
One convenient implementation is to mix similarity weights with recent-window weights according to the drift gate,
\[
    w_{i,t}
    =
    g_t(D_t)
    s_{i,t}
    +
    \{1-g_t(D_t)\}
    \frac{\mathbf 1\{i\in\mathcal R_t\}}{|\mathcal R_t|},
\]
where $\mathcal R_t\subseteq\mathcal C_t$ is a nonempty recent calibration window. These weights are nonnegative and sum to one. Thus, when drift is low, the method can borrow broadly from representation-similar past examples. When drift is high, the method relies more heavily on recent data.

The effective calibration size is
\[
    n_{\mathrm{eff},t}=\frac{1}{\sum_{i\in\mathcal C_t}w_{i,t}^2}.
\]
This quantity is large when the conformal threshold is supported by many calibration examples and small when only a few examples dominate the weighted quantile.

Let $\widehat F_{w,t}$ be the weighted empirical distribution of calibration risks,
\[
    \widehat F_{w,t}(r)=\sum_{i\in\mathcal C_t}w_{i,t}\mathbf 1\{R_i\leq r\}.
\]
The drift-aware conformal risk threshold is
\[
    q_t=\inf\{r:\widehat F_{w,t}(r)\geq 1-\alpha_t\},
\]
where $\alpha_t$ may be fixed at $\alpha$ or updated online. A more conservative implementation can replace $1-\alpha_t$ by $1-\alpha_t+\epsilon_t(\eta)$, capped at one, when the user wants to spend calibration uncertainty directly in the threshold rather than in the reported bound.

\paragraph{Algorithm sketch.}
\begin{enumerate}[label=\arabic*.]
    \item Embed current prompts and responses using $\Phi$.
    \item Weight calibration examples according to representation similarity to the current window.
    \item Compute a transport drift score $D_t$ between the current window and the preliminary similarity-weighted calibration distribution.
    \item Gate or temper the calibration weights when $D_t$ is elevated.
    \item Compute a weighted conformal risk threshold.
    \item Apply the threshold to decide whether to answer, abstain, retrieve, escalate, or flag uncertainty.
    \item Update the risk target after observing correctness, verifier feedback, or human review.
\end{enumerate}

\section{Theory}

This section gives the mathematical target for the proposed method. The main point is not that drift can be made to disappear. Rather, the coverage or risk-control error should be decomposed into interpretable pieces: drift in representation space, residual mismatch not captured by that representation, and uncertainty from using a finite weighted calibration sample.

\begin{definition}[Conditional risk distribution]
Let $F_t(r)=\Pr(R_t\leq r\mid \mathcal H_{t-1},Z_t)$ denote the conditional distribution of the risk score for the current prompt-response pair, where $\mathcal H_{t-1}$ is the information available before observing $Y_t$.
\end{definition}

\begin{definition}[Weighted calibration law]
The weighted empirical calibration risk distribution is
\[
    \widehat F_{w,t}(r)=\sum_{i\in\mathcal C_t} w_{i,t}\mathbf 1\{R_i\leq r\}.
\]
Its empirical quantile at level $1-\alpha_t$ is
\[
    q_t=\inf\{r:\widehat F_{w,t}(r)\geq 1-\alpha_t\}.
\]
We write $F_{w,t}^\star$ for the population risk law that the weighted calibration sample is meant to estimate after the weights are fixed.
\end{definition}

\begin{assumption}[Representation-stability of risk]
There exist constants $L_t\geq 0$ and $\delta_t\geq 0$ such that, for all thresholds $r\in[0,1]$,
\[
    |F_t(r)-F_{w,t}^\star(r)|
    \leq
    L_tD_t+\delta_t,
\]
where $F_{w,t}^\star$ is the population version of the weighted calibration risk law after the final weights have been chosen. The term $L_tD_t$ measures drift explained by the representation, while $\delta_t$ measures residual mismatch not captured by the representation.
\end{assumption}

\begin{assumption}[Weighted calibration concentration]
Conditional on the weights and the past information used to choose them, the weighted empirical calibration distribution satisfies a Bernstein-Dvoretzky-Kiefer-Wolfowitz type bound: for every $\eta\in(0,1)$,
\[
    \Pr\left\{
    \sup_{r\in[0,1]}|\widehat F_{w,t}(r)-F_{w,t}^\star(r)|
    >
    \epsilon_t(\eta)
    \right\}
    \leq \eta,
\]
where
\[
    \epsilon_t(\eta)
    =
    \sqrt{\frac{\log(2/\eta)}{2n_{\mathrm{eff},t}}}.
\]
\end{assumption}

This assumption is not automatic for every adaptive data stream. It is a conditional concentration requirement on the weighted calibration sample. It is reasonable when the calibration risks are independent or weakly dependent after conditioning on the information used to form the weights, but it should be checked or stress-tested in deployment.

\begin{proposition}[A finite-sample route to weighted concentration]
\label{prop:weighted-concentration}
Let $\mathcal G_t$ contain the information used to choose the weights, including the prompt-response representations, the current window, and the calibration indices. Suppose the weights are $\mathcal G_t$-measurable, nonnegative, and sum to one. Suppose also that, conditional on $\mathcal G_t$, the calibration risk variables $\{R_i:i\in\mathcal C_t\}$ are independent. Define the conditional population law
\[
    F_{w,t}^\star(r)
    =
    \sum_{i\in\mathcal C_t}w_{i,t}
    \Pr(R_i\leq r\mid \mathcal G_t).
\]
Then, for every $\eta\in(0,1)$, with conditional probability at least $1-\eta$,
\[
    \sup_{r\in[0,1]}
    |\widehat F_{w,t}(r)-F_{w,t}^\star(r)|
    \leq
    \sqrt{
    \frac{\log\{2(|\mathcal C_t|+1)/\eta\}}
    {2n_{\mathrm{eff},t}}
    }.
\]
\end{proposition}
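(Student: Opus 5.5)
Conditional on $\mathcal G_t$, the weights $w_{i,t}$ are fixed constants and the $R_i$ are independent, so I will combine Hoeffding's inequality for weighted sums with a union bound over a finite set of thresholds. That set will have at most $|\mathcal C_t|+1$ elements, which produces the $\log\{2(|\mathcal C_t|+1)/\eta\}$ term. Write $n=|\mathcal C_t|$.

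\emph{Step 1 (fixed threshold).} Fix $r\in[0,1]$. The difference $\widehat F_{w,t}(r)-F_{w,t}^\star(r)=\sum_i w_{i,t}\{\mathbf 1\{R_i\le r\}-\Pr(R_i\le r\mid\mathcal G_t)\}$ is a sum of conditionally independent, centered terms, and the $i$th term lies in an interval of length $w_{i,t}$. Hoeffding's inequality then gives
\[
\Pr\bigl\{|\widehat F_{w,t}(r)-F_{w,t}^\star(r)|>\epsilon\bigm|\mathcal G_t\bigr\}\le 2\exp\Bigl(-2\epsilon^2\Big/\textstyle\sum_i w_{i,t}^2\Bigr)=2\exp(-2\epsilon^2 n_{\mathrm{eff},t}).
\]
The same bound holds for the left-limit versions $\widehat F_{w,t}(r^-)$ and $F_{w,t}^\star(r^-)$, using the indicators $\mathbf 1\{R_i<r\}$.

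\emph{Step 2 (reduction of the supremum to finitely many points).} I expect this to be the main obstacle. The data-dependent jump points $R_i$ cannot be used as the grid, because they are random given $\mathcal G_t$. The grid must be deterministic given $\mathcal G_t$, and it must be built from the population law $F_{w,t}^\star$.
\begin{itemize}
\item Set $m=n$ and take the generalized quantiles $r_k=\inf\{r:F_{w,t}^\star(r)\ge k/m\}$ for $k=0,\dots,m$. This gives $m+1=n+1$ points, each $\mathcal G_t$-measurable.
\item Apply the standard DKW-type bracketing argument. On each bracket, monotonicity of both functions sandwiches the deviation, so
\[
\sup_r|\widehat F-F^\star|\le\max_k\max\bigl(|\widehat F(r_k)-F^\star(r_k)|,\,|\widehat F(r_k^-)-F^\star(r_k^-)|\bigr)+1/m.
\]
\item The slack $1/m$ is awkward, because the stated bound has no additive term.
\end{itemize}

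To remove the slack, I would first try to absorb it. If that fails, I would use the crude finite-class version: the indicator functions $r\mapsto\mathbf 1\{R_i\le r\}$ over all $r$ induce at most $n+1$ distinct patterns $(\mathbf 1\{R_i\le r\})_{i}$. I would try to union-bound over these patterns, justifying that step by conditioning on the order statistics.

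The cleanest route I would pursue is to read the $(|\mathcal C_t|+1)$ factor as a union over $n+1$ deterministic grid points and accept the bracketing argument at points of exact attainment, treating atoms via the left limits.

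\emph{Step 3 (union bound and inversion).} Union-bound over the grid, setting $2(n+1)\exp(-2\epsilon^2 n_{\mathrm{eff},t})=\eta$. Solving for $\epsilon$ gives exactly the displayed $\sqrt{\log\{2(|\mathcal C_t|+1)/\eta\}/(2n_{\mathrm{eff},t})}$. Since every probability was conditional on $\mathcal G_t$, the conclusion holds with conditional probability at least $1-\eta$, as the proposition states.

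The point needing most care is the discretization in Step 2. Making the $1/m$ slack vanish, or else justifying the union over $n+1$ data-induced cells without a conditioning error, is where I expect the argument to be delicate. If necessary I would state the result with an extra $1/|\mathcal C_t|$ term.
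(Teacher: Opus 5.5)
\textbf{Gap in Step 2: no loss-free reduction to $|\mathcal C_t|+1$ events.}
Your Steps 1 and 3 coincide with the paper's proof: fixed-$r$ Hoeffding with variance proxy $\sum_i w_{i,t}^2=1/n_{\mathrm{eff},t}$, a union bound over $|\mathcal C_t|+1$ events, then inversion. Step 2, however, never reduces the supremum to $|\mathcal C_t|+1$ fixed events without loss, so the proposal does not prove the displayed bound.

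The quantile-grid bracketing is rigorous, but its slack cannot be absorbed. Looking at ``points of exact attainment'' does not help either: if $F^\star_{w,t}$ is continuous, then $F^\star_{w,t}(r_k^-)-F^\star_{w,t}(r_{k-1})=1/m$ exactly. Done carefully, this route gives the following:
\begin{itemize}
\item On $[r_{k-1},r_k)$, the upper deviation is at most $\widehat F_{w,t}(r_k^-)-F^\star_{w,t}(r_k^-)+1/m$.
\item The lower deviation is at most $F^\star_{w,t}(r_{k-1})-\widehat F_{w,t}(r_{k-1})+1/m$.
\item These are $2m$ \emph{one-sided} Hoeffding events, each costing $\exp(-2\epsilon^2 n_{\mathrm{eff},t})$.
\end{itemize}
With $m=|\mathcal C_t|+1$ you get the stated radius \emph{plus} $1/(|\mathcal C_t|+1)$. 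Your bookkeeping is also off: two-sided control at both $r_k$ and $r_k^-$ for $n+1$ points costs a prefactor $4(n+1)$, not $2(n+1)$.

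\textbf{How the paper closes Step 2, and why your doubt is justified.}
The paper uses the route you considered and distrusted. Since $\widehat F_{w,t}$ jumps only at the observed risks, it asserts that the $|\mathcal C_t|+1$ cells between consecutive order statistics suffice, and union-bounds the fixed-$r$ Hoeffding bound over them. This step does not hold as written:
\begin{itemize}
\item The cell endpoints are not $\mathcal G_t$-measurable.
\item At a random point such as $r=R_i$, the summands are neither centered nor independent. Writing $F_i(r)=\Pr(R_i\le r\mid\mathcal G_t)$, the $i$th summand is $w_{i,t}\{1-F_i(R_i)\}\ge 0$.
\item Conditioning on $R_i$ restores independence of the other summands. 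But it only yields $\exp\{-2(\epsilon-w_{i,t})^2/\sum_{l\neq i}w_{l,t}^2\}$, which is a slack of $w_{i,t}$ again.
\item Conditioning on order statistics helps only under exchangeability, which independent, non-identically distributed $R_i$ lack.
\item Counting the $n+1$ patterns is the Sauer--VC argument. It needs ghost-sample symmetrization, which changes both the prefactor and the exponent.
\end{itemize}
So neither your proposal nor the paper's argument gives the slack-free bound for general weights.

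\textbf{A case where the slack can be removed.}
Suppose $\widehat F_{w,t}$ takes values in a fixed set $V$ with $|V|\le|\mathcal C_t|+1$, for example equal weights. Then grid by \emph{levels} rather than quantiles. For $v\in V$, set $s_v=\inf\{r:F^\star_{w,t}(r)\ge v-\epsilon\}$. The event $\{\exists r:\widehat F_{w,t}(r)\ge v,\ F^\star_{w,t}(r)<v-\epsilon\}$ is contained in $\{\widehat F_{w,t}(s_v^-)\ge v\}$. On that event, the deviation at the deterministic point $s_v^-$ is at least $\epsilon$. Handling lower deviations symmetrically gives prefactor $2|\mathcal C_t|$ with no additive term.

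For arbitrary weights, $V$ can have $2^{|\mathcal C_t|}$ elements. You must then either keep the additive term or invoke a genuinely uniform DKW-type inequality for weighted sums of independent, non-identically distributed indicators.
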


\begin{proof}
Fix a threshold $r$. Conditional on $\mathcal G_t$, the variables
\[
    X_i(r)=w_{i,t}\{\mathbf 1(R_i\leq r)-\Pr(R_i\leq r\mid \mathcal G_t)\}
\]
are independent, mean zero, and bounded in intervals of length $w_{i,t}$. Hoeffding's inequality gives
\[
    \Pr\left\{
    \left|\sum_{i\in\mathcal C_t}X_i(r)\right|>\epsilon
    \mid \mathcal G_t
    \right\}
    \leq
    2\exp\left\{-\frac{2\epsilon^2}{\sum_iw_{i,t}^2}\right\}.
\]
The weighted empirical CDF is a step function with jumps only at the observed calibration risks. Between two adjacent ordered calibration risks the value of the supremum cannot increase, so it is enough to check at most $|\mathcal C_t|+1$ threshold regions. A union bound over these regions gives
\[
    \Pr\left\{
    \sup_r|\widehat F_{w,t}(r)-F_{w,t}^\star(r)|>\epsilon
    \mid \mathcal G_t
    \right\}
    \leq
    2(|\mathcal C_t|+1)
    \exp\left\{-2\epsilon^2 n_{\mathrm{eff},t}\right\}.
\]
Solving the right side equal to $\eta$ proves the claim.
\end{proof}

\begin{corollary}[Weakly dependent calibration streams]
\label{cor:weak-dependence}
Suppose the calibration stream can be divided into $B_t$ blocks such that blocks are approximately independent after conditioning on $\mathcal G_t$. Let
\[
    \omega_{b,t}=\sum_{i\in B_b} w_{i,t},
    \qquad
    n_{\mathrm{block},t}=\frac{1}{\sum_{b=1}^{B_t}\omega_{b,t}^2}.
\]
If the total coupling error between the actual block process and an independent-block copy is at most $\kappa_t$, then, with probability at least $1-\eta-\kappa_t$,
\[
    \sup_{r\in[0,1]}
    |\widehat F_{w,t}(r)-F_{w,t}^\star(r)|
    \leq
    \sqrt{
    \frac{\log\{2(B_t+1)/\eta\}}
    {2n_{\mathrm{block},t}}
    }.
\]
\end{corollary}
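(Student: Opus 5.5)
The corollary follows the proof of Proposition~\ref{prop:weighted-concentration}, with the block as the unit of independence. The plan has three steps: write the weighted CDF error as a sum of block-level contributions, apply Hoeffding and a union bound on an independent-block copy, and transfer the bound back to the actual process at a cost of $\kappa_t$.

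\textbf{Step 1: block decomposition.} Fix $r$. The error decomposes as
\[
\widehat F_{w,t}(r)-F_{w,t}^\star(r)=\sum_{b=1}^{B_t}Y_b(r),
\qquad
Y_b(r)=\sum_{i\in B_b}w_{i,t}\{\mathbf 1(R_i\le r)-\Pr(R_i\le r\mid\mathcal G_t)\}.
\]
Each indicator term lies in an interval of length $w_{i,t}$. So, whatever the dependence inside the block, $Y_b(r)$ lies in an interval of length at most $\omega_{b,t}$. Conditionally on $\mathcal G_t$, $Y_b(r)$ also has mean zero.

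\textbf{Step 2: concentration on the independent-block copy.} Let $\{\widetilde R_i\}$ be the coupled process whose blocks are conditionally independent given $\mathcal G_t$. We choose it so that each block has the same within-block law as the original, so the block means are unchanged. The copy's block sums $\widetilde Y_b(r)$ are then independent, mean zero, and range-bounded by $\omega_{b,t}$. Hoeffding's inequality gives
\[
\Pr\Bigl\{\Bigl|\sum_b\widetilde Y_b(r)\Bigr|>\epsilon\Bigm|\mathcal G_t\Bigr\}\le 2\exp\{-2\epsilon^2 n_{\mathrm{block},t}\}.
\]

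To pass from fixed $r$ to the supremum over $r$, reuse the step-function reduction from the proposition. The union bound is then over threshold regions.

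\textbf{Step 3: transfer to the actual process.} Since the total coupling error is at most $\kappa_t$, the actual process and the copy agree outside an event of probability at most $\kappa_t$. On that agreement event the suprema coincide. A union bound then gives overall failure probability at most $\eta+\kappa_t$. Setting $2(B_t+1)\exp\{-2\epsilon^2 n_{\mathrm{block},t}\}=\eta$ yields the stated radius.

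\textbf{Main obstacle.} The delicate point is the count of threshold regions in Step 2. The jump points of $\widehat F_{w,t}$ are the individual risks $R_i$, so the proposition's argument gives $|\mathcal C_t|+1$ regions, not $B_t+1$. To obtain the $B_t$-dependent count, I would first try to read the block-independence hypothesis as a block-level model: each block contributes a step function in $r$, and a chaining or VC argument over the class of block-sum indicators controls the supremum with $B_t$-dependent complexity.

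If that fails, I would fall back to a DKW-type bound for independent but non-identical block summands. The natural candidate is a Massart-style inequality for weighted sums of bounded independent processes monotone in $r$: each $\sum_{i\in B_b}w_{i,t}\mathbf 1(\widetilde R_i\le r)$ is nondecreasing in $r$ with range $\omega_{b,t}$. A bracketing argument with $B_t+1$ brackets at the quantiles of $F_{w,t}^\star$ can be made to work at the cost of an additive term. If that term cannot be removed, I would state the corollary with $|\mathcal C_t|+1$ in place of $B_t+1$, which is the honest fallback.
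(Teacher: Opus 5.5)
Your Steps 1--3 follow the paper's route: its proof applies Proposition~\ref{prop:weighted-concentration} to the independent-block copy, ``treating each block as one weighted unit with weight $\omega_{b,t}$,'' and adds $\kappa_t$ to the failure probability. Three parts of your version are sound: the block-level Hoeffding bound (ranges $\omega_{b,t}$, so $\sum_b\omega_{b,t}^2=1/n_{\mathrm{block},t}$), the choice of a copy with the same within-block laws, and the coupling transfer. But the proposal does not prove the displayed inequality. The factor $B_t+1$ is never derived, and your final fallback replaces it by $|\mathcal C_t|+1$, which is a different, weaker statement.

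The obstacle you name is genuine, and the paper does not get past it either. The Proposition counts threshold regions by the jumps of $\widehat F_{w,t}$, and each of its units contributes one indicator, hence one jump. A block contributes $\sum_{i\in B_b}w_{i,t}\mathbf 1(R_i\le r)$, which jumps at every $R_i$ in the block. ``One unit per block'' is exactly the identification you flagged, so you are not missing an idea the paper supplies: there the $B_t+1$ is asserted, not derived. Note also that your $|\mathcal C_t|+1$ fallback reuses the step-function reduction, which union-bounds over regions cut out by the random $R_i$. Hoeffding at a fixed $r$ does not cover such regions, so that fallback is only as rigorous as the Proposition's own proof.

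Two routes do give a correct $B_t$-dependent bound. Your bracketing idea is self-contained. Take the $\mathcal G_t$-measurable points $t_k=\inf\{r:F^\star_{w,t}(r)\ge k/K\}$ for $1\le k<K$, with $t_0=-\infty$ and $t_K=+\infty$. They satisfy $F^\star_{w,t}(t_k^-)-F^\star_{w,t}(t_{k-1})\le 1/K$. Hence the supremum is at most $1/K$ plus the largest of the $2(K-1)$ deviations at the fixed points $t_k$ and $t_k^-$, and your block Hoeffding bound controls each of these. Taking $K=B_t+1$ and adding the coupling error gives, with probability at least $1-\eta-\kappa_t$,
\[
\sup_{r}|\widehat F_{w,t}(r)-F^\star_{w,t}(r)|\le\frac{1}{B_t+1}+\sqrt{\frac{\log(4B_t/\eta)}{2n_{\mathrm{block},t}}}.
\]
The additive term is below $1/n_{\mathrm{block},t}$, because $n_{\mathrm{block},t}\le B_t$.

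Alternatively, randomization makes the block-as-unit heuristic exact:
\begin{enumerate}[label=(\alph*)]
\item In the independent-block copy, draw one index $I_b$ in each block $b$ with probabilities $w_{i,t}/\omega_{b,t}$, and set $V_b=\widetilde R_{I_b}$.
\item The copy's weighted CDF is then the conditional expectation, given the risks, of the $B_t$-point weighted empirical CDF $\sum_b\omega_{b,t}\mathbf 1(V_b\le r)$. Its centering is again $F^\star_{w,t}$.
\item By conditional Jensen, your supremum is at most the conditional mean $X$ of that CDF's supremum deviation $S$.
\item Take the Proposition as given for $S$, now with $B_t$ units. Applying $\Pr(X>\epsilon)\le\mathbb E(S-s)_+/(\epsilon-s)$ with $s=\epsilon-1/(4n_{\mathrm{block},t}\epsilon)$ recovers the stated radius with $2(B_t+1)$ replaced by $4e(B_t+1)$.
\end{enumerate}

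Neither route, nor a generic chaining or VC bound, delivers the exact displayed constant with no additive term. The corollary is therefore best restated in one of these forms.
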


\begin{proof}
Apply Proposition~\ref{prop:weighted-concentration} to the independent-block copy, treating each block as one weighted unit with weight $\omega_{b,t}$. The effective sample size becomes $n_{\mathrm{block},t}$. The coupling error adds at most $\kappa_t$ to the failure probability. This is the standard blocking logic: dependence lowers the useful sample size from the point-level effective size to the block-level effective size.
\end{proof}

\begin{remark}[How to use the deeper bound]
Proposition~\ref{prop:weighted-concentration} and Corollary~\ref{cor:weak-dependence} justify the weighted concentration assumption under concrete data-generating conditions. In an independent calibration pool, one may use
\[
    \epsilon_t^{\mathrm{ind}}(\eta)
    =
    \sqrt{
    \frac{\log\{2(|\mathcal C_t|+1)/\eta\}}
    {2n_{\mathrm{eff},t}}
    }.
\]
In a dependent stream, one can replace $n_{\mathrm{eff},t}$ by the block effective size and add the coupling error to the failure probability. The message is simple: strong drift weighting is useful only when it does not make the effective calibration size too small.
\end{remark}

\begin{lemma}[Weighted quantile calibration error]
Under the weighted calibration concentration assumption, with probability at least $1-\eta$,
\[
    F_{w,t}^\star(q_t)
    \geq
    1-\alpha_t-\epsilon_t(\eta).
\]
\end{lemma}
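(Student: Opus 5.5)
The plan is to work on the high-probability event supplied by the weighted calibration concentration assumption. On that event I will compare the empirical and population laws at the single random point $q_t$, and then use the defining property of the weighted quantile. Concretely, let
\[
    \mathcal E_t=\Big\{\sup_{r\in[0,1]}|\widehat F_{w,t}(r)-F_{w,t}^\star(r)|\leq\epsilon_t(\eta)\Big\}.
\]
Conditional on the weights and the past information used to choose them, the assumption gives $\Pr(\mathcal E_t^c)\le\eta$. Taking expectations removes the conditioning, so $\Pr(\mathcal E_t)\ge 1-\eta$ unconditionally as well.

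\textbf{Quantile property.} Since $\widehat F_{w,t}$ is a right-continuous step function with jumps at the calibration risks, the infimum defining $q_t$ is attained. Hence $\widehat F_{w,t}(q_t)\ge 1-\alpha_t$.

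A degenerate case needs care. If $1-\alpha_t\le 0$, the bound is trivial. If $1-\alpha_t\le 1$, the set $\{r:\widehat F_{w,t}(r)\ge 1-\alpha_t\}$ is nonempty because $\widehat F_{w,t}(1)=1$ (the risks lie in $[0,1]$ and the weights sum to one). Therefore $q_t\in[0,1]$ and the attainment argument applies. The one subtlety is that right-continuity of a finite weighted step CDF gives attainment at the infimum, and I would state this explicitly.

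\textbf{Transfer to the population law.} On $\mathcal E_t$ the uniform bound holds at every $r\in[0,1]$. In particular it holds at the data-dependent point $r=q_t$, since $q_t\in[0,1]$. This gives
\[
    F_{w,t}^\star(q_t)\ge \widehat F_{w,t}(q_t)-\epsilon_t(\eta)\ge 1-\alpha_t-\epsilon_t(\eta).
\]

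\textbf{Main obstacle.} The key point is that $q_t$ is random and depends on the same calibration sample. A pointwise Hoeffding bound at a fixed $r$ would therefore not suffice. This is precisely why the assumption is stated with a supremum over $r$: the uniform (DKW-type) control makes the evaluation at $q_t$ legitimate without any further union bound.

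If one instead works from Proposition~\ref{prop:weighted-concentration} rather than the assumption, the identical argument goes through with $\epsilon_t(\eta)$ replaced by $\epsilon_t^{\mathrm{ind}}(\eta)$. Likewise, under Corollary~\ref{cor:weak-dependence} one uses the block-level radius and the failure probability becomes $\eta+\kappa_t$.
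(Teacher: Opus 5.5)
Your proposal is correct and follows the paper's proof. Both arguments read off $\widehat F_{w,t}(q_t)\geq 1-\alpha_t$ from the definition of the weighted quantile and then pass to $F_{w,t}^\star(q_t)$ on the uniform concentration event. Your extra checks only make explicit what the paper's two-line argument leaves implicit: attainment of the infimum by right-continuity, $q_t\in[0,1]$ so the supremum over $[0,1]$ applies, and the fact that uniformity is what licenses evaluation at the data-dependent point $q_t$.
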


\begin{proof}
By definition, $\widehat F_{w,t}(q_t)\geq 1-\alpha_t$. On the concentration event,
\[
    F_{w,t}^\star(q_t)
    \geq
    \widehat F_{w,t}(q_t)-\sup_r|\widehat F_{w,t}(r)-F_{w,t}^\star(r)|
    \geq
    1-\alpha_t-\epsilon_t(\eta).
\]
\end{proof}

\begin{theorem}[Approximate drift-aware risk control]
Suppose the representation-stability and weighted concentration assumptions hold. Then, with probability at least $1-\eta$,
\[
    \Pr(R_t>q_t\mid \mathcal H_{t-1},Z_t)
    \leq
    \alpha_t
    +
    L_tD_t
    +
    \delta_t
    +
    \epsilon_t(\eta).
\]
Equivalently,
\[
    \Pr(R_t>q_t\mid \mathcal H_{t-1},Z_t)
    \leq
    \alpha_t
    +
    L_tD_t
    +
    \delta_t
    +
    \sqrt{\frac{\log(2/\eta)}{2n_{\mathrm{eff},t}}}.
\]
\end{theorem}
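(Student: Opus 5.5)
The proof chains the weighted quantile calibration lemma with the representation-stability assumption, evaluated at the data-dependent point $r=q_t$. The probability statement lives on the concentration event $\Omega_\eta$ of the weighted calibration concentration assumption, which has probability at least $1-\eta$. On $\Omega_\eta$ the lemma gives $F_{w,t}^\star(q_t)\ge 1-\alpha_t-\epsilon_t(\eta)$. Since $R_t\in[0,1]$ and $q_t$ is attained at some calibration risk, $q_t\in[0,1]$.

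Next I would rewrite the target. Because $q_t$ is built only from calibration risks, weights and the current window, it is $(\mathcal H_{t-1},Z_t)$-measurable: the calibration pool, the window and $Z_t$ are all available before $Y_t$. Treating $q_t$ as a fixed threshold under this conditioning gives
\[
\Pr(R_t>q_t\mid\mathcal H_{t-1},Z_t)=1-F_t(q_t).
\]
The representation-stability assumption holds for all $r\in[0,1]$, so it holds at $r=q_t$, and $F_t(q_t)\ge F_{w,t}^\star(q_t)-L_tD_t-\delta_t$. Combining the two displays on $\Omega_\eta$,
\[
1-F_t(q_t)\le 1-F_{w,t}^\star(q_t)+L_tD_t+\delta_t\le \alpha_t+\epsilon_t(\eta)+L_tD_t+\delta_t .
\]
This is the first form of the claim. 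The second form only substitutes the explicit expression for $\epsilon_t(\eta)$ from the concentration assumption.

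The main obstacle is the measurability step. The uniformity in $r$ of the stability assumption lets us plug in the random $q_t$ without extra cost. However, we need $q_t$ and $\Omega_\eta$ to be determined by $\mathcal H_{t-1}$ together with the calibration information, so that conditioning on $(\mathcal H_{t-1},Z_t)$ does not change the law of $R_t$ used in $F_t$. I would state this explicitly: the calibration risks $R_i$ for $i\in\mathcal C_t$ are observed before time $t$, hence they lie in $\mathcal H_{t-1}$. The "with probability $1-\eta$" statement then refers to the randomness of the calibration sample that defines $\Omega_\eta$. The conditional bound holds pointwise on that event.

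If $\alpha_t$ is updated online, it is also $\mathcal H_{t-1}$-measurable, so nothing changes. If the conservative level $1-\alpha_t+\epsilon_t(\eta)$ is used, the same chain removes the $\epsilon_t(\eta)$ term, up to the cap at one.
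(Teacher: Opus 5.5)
Your proposal is correct and takes the same approach as the paper. You write the violation probability as $1-F_t(q_t)$, apply representation stability at $r=q_t$, and combine this with the weighted quantile lemma on the concentration event of probability at least $1-\eta$. Your checks that $q_t\in[0,1]$ and that $q_t$ is $(\mathcal H_{t-1},Z_t)$-measurable make explicit what the paper's one-line identity $\Pr(R_t>q_t\mid\mathcal H_{t-1},Z_t)=1-F_t(q_t)$ uses implicitly.
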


\begin{proof}
The violation probability is
\[
    \Pr(R_t>q_t\mid \mathcal H_{t-1},Z_t)=1-F_t(q_t).
\]
By representation stability,
\[
    F_t(q_t)
    \geq
    F_{w,t}^\star(q_t)-L_tD_t-\delta_t.
\]
The weighted quantile lemma gives
\[
    F_{w,t}^\star(q_t)\geq 1-\alpha_t-\epsilon_t(\eta)
\]
with probability at least $1-\eta$. Combining these displays gives
\[
    F_t(q_t)\geq 1-\alpha_t-\epsilon_t(\eta)-L_tD_t-\delta_t.
\]
Rearranging proves the result.
\end{proof}

\begin{remark}[Interpretation]
The bound is useful because every term has a diagnostic meaning. The target risk level is $\alpha_t$. The term $D_t$ measures whether the current prompt stream has moved away from the preliminary similarity-weighted calibration distribution. The residual term $\delta_t$ records what the representation fails to explain. The effective-size term shows the price of concentrating the final calibration weights on too few examples.
\end{remark}

\begin{proposition}[When drift gating helps]
Compare two calibration rules: an ungated rule with drift $D_t^{u}$ and effective size $n_{\mathrm{eff},t}^{u}$, and a gated rule with drift $D_t^{g}$ and effective size $n_{\mathrm{eff},t}^{g}$. Ignoring common residual mismatch terms, the gated rule has a smaller upper bound whenever
\[
    L_t(D_t^{u}-D_t^{g})
    >
    \sqrt{\frac{\log(2/\eta)}{2n_{\mathrm{eff},t}^{g}}}
    -
    \sqrt{\frac{\log(2/\eta)}{2n_{\mathrm{eff},t}^{u}}}.
\]
\end{proposition}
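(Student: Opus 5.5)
The plan is to apply the upper bound of the Approximate drift-aware risk control theorem to each calibration rule separately and compare the two bounds term by term. For a generic rule $c\in\{u,g\}$ with drift $D_t^{c}$ and effective size $n_{\mathrm{eff},t}^{c}$, the theorem bounds the conditional violation probability by
\[
    B^{c}
    =
    \alpha_t+L_tD_t^{c}+\delta_t+\sqrt{\frac{\log(2/\eta)}{2n_{\mathrm{eff},t}^{c}}},
\]
and this holds with probability at least $1-\eta$. The phrase ``ignoring common residual mismatch terms'' means we take the same $\alpha_t$, $L_t$, $\delta_t$ and $\eta$ for both rules. These shared terms cancel in the difference $B^{u}-B^{g}$, so only the drift and effective-size terms remain.

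The key step is then a one-line rearrangement. We compute
\[
    B^{u}-B^{g}
    =
    L_t(D_t^{u}-D_t^{g})
    -\left(\sqrt{\frac{\log(2/\eta)}{2n_{\mathrm{eff},t}^{g}}}-\sqrt{\frac{\log(2/\eta)}{2n_{\mathrm{eff},t}^{u}}}\right).
\]
This is strictly positive exactly when the displayed inequality of the proposition holds. So under that inequality $B^{g}<B^{u}$, which means the gated rule has the smaller upper bound. In fact the condition is both sufficient and necessary for the gated bound to be strictly smaller.

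The only point needing care is interpretive rather than technical. The proposition compares upper bounds, not true violation probabilities, so no claim is made that the gated rule actually has lower risk. I would also note two conditions on the comparison. First, both bounds hold on events of probability at least $1-\eta$, and the comparison is of the deterministic bound values. Second, using the same $L_t$ and $\delta_t$ for both rules requires that the representation-stability assumption hold for each rule's $F_{w,t}^\star$ with common constants. This is the implicit meaning of ``common residual mismatch terms'', and I would state it explicitly.

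Finally, I would remark on the typical regime. Gating usually lowers drift ($D_t^{g}\le D_t^{u}$) but also lowers the effective size ($n_{\mathrm{eff},t}^{g}\le n_{\mathrm{eff},t}^{u}$), so both sides of the inequality are nonnegative. The condition therefore expresses a genuine trade-off: the drift reduction, scaled by $L_t$, must exceed the added concentration penalty.
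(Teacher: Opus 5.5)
Your proposal is correct and takes the same route as the paper's proof: apply the theorem's bound to each rule, subtract so that the shared $\alpha_t$ and $\delta_t$ terms cancel, and read off the stated condition. Your extra remarks are accurate refinements that the paper leaves unstated: the condition is also necessary for a strictly smaller bound, and common constants $L_t$, $\delta_t$, $\alpha_t$ across both rules are the implicit content of ``ignoring common residual mismatch terms.''
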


\begin{proof}
Subtract the theorem's upper bound for the gated rule from the corresponding bound for the ungated rule. The gated rule is favored when the reduction in drift bias is larger than the increase in weighted quantile uncertainty.
\end{proof}

\paragraph{Online risk update.}
The preceding theorem is a one-step statement. To stabilize long-run behavior, the target level can be updated after observing $R_t$:
\[
    \alpha_{t+1}
    =
    \Pi_{[\alpha_{\min},\alpha_{\max}]}
    \left\{
    \alpha_t+\gamma\left(\alpha-\mathbf 1\{R_t>q_t\}\right)
    \right\},
\]
where $\gamma>0$ is a learning rate and $\Pi$ denotes projection. If violations occur too often, the update decreases $\alpha_t$ and widens future thresholds. If violations are too rare, it increases $\alpha_t$ and allows sharper decisions.

\section{Empirical Design}

The empirical section will focus on tasks where prompt shift is easy to explain and practically important.
\begin{enumerate}[label=(\roman*)]
    \item Multiple-choice and open-domain question answering under topic shift.
    \item Summarization under source-domain shift.
    \item Toxicity or safety classification under user-population shift.
    \item Hallucination-sensitive generation with retrieval changes.
\end{enumerate}

The main comparisons will include static conformal risk control, rolling calibration, adaptive conformal risk control, representation-weighted calibration, and the full drift-aware method.

\section{Synthetic Prompt-Shift Pilot}

As a first diagnostic experiment, we construct a synthetic prompt stream with recurring prompt regimes, abrupt domain shifts, and localized bursts of higher risk. Each prompt-response pair is represented by a low-dimensional embedding. The risk score is larger in harder regimes and during drift bursts. This pilot is deliberately simple: its purpose is to verify that static calibration breaks under prompt shift and that drift-aware calibration can recover useful risk control.

The compared methods are static conformal risk control, rolling conformal risk control, adaptive conformal risk control, representation-weighted conformal risk control, and PromptShift-CRC. Static CRC calibrates once and keeps the same threshold. Rolling CRC uses a moving calibration window. Adaptive CRC updates the target risk level after observed violations. Representation-weighted CRC weights calibration examples by embedding similarity. PromptShift-CRC combines representation weighting, drift gating, recent-window fallback, effective calibration size monitoring, and online risk adaptation.

\begin{table}[t]
\centering
\caption{Synthetic prompt-shift pilot. The target coverage is $0.90$. Static CRC fails after prompt/domain drift. PromptShift-CRC gives the closest coverage among the adaptive methods while using a smaller effective calibration pool.}
\label{tab:prompt-shift-pilot}
\begin{tabular}{lrrrr}
\toprule
Method & Coverage & Mean threshold & Mean drift & Mean $n_{\mathrm{eff}}$ \\
\midrule
Static CRC & 0.344 & 0.505 & 2.641 & 300.0 \\
Rolling CRC & 0.851 & 0.768 & 2.383 & 300.0 \\
Adaptive CRC & 0.882 & 0.761 & 2.383 & 300.0 \\
Representation-weighted CRC & 0.828 & 0.735 & 2.617 & 297.6 \\
PromptShift-CRC & 0.892 & 0.757 & 2.621 & 232.3 \\
\bottomrule
\end{tabular}
\end{table}

\begin{figure}[t]
    \centering
    \includegraphics[width=\textwidth]{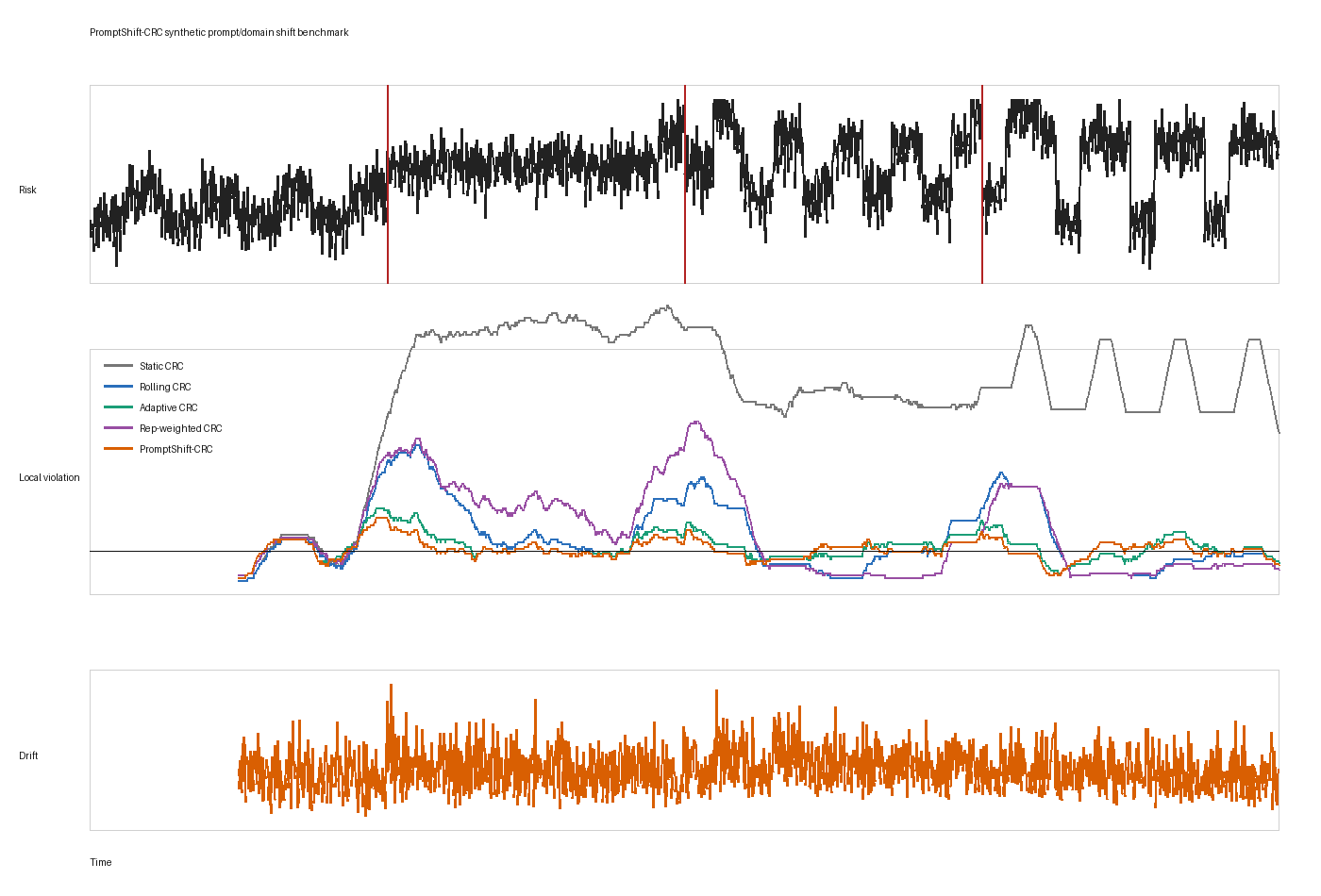}
    \caption{Synthetic prompt/domain shift benchmark. Vertical dashed lines indicate regime changes. Static CRC under-covers after drift, while adaptive and drift-aware methods recover risk control more effectively. PromptShift-CRC combines drift monitoring with online adaptation.}
    \label{fig:promptshift-pilot}
\end{figure}

\section{LLM-Style Prompt-Shift Benchmark Prototypes}

To move beyond a single synthetic stream, we also build three LLM-style benchmark prototypes. These prototypes do not replace public LLM benchmark outputs. They give us a controlled test bed that looks like common deployment settings, and the same code can later be connected to real model outputs without changing the calibration layer.

The three prototype tasks are question answering under topic shift, summarization under source-domain shift, and safety screening under user-intent shift. Each task contains a sequence of prompt-response embeddings, latent domain labels, and bounded risk scores. The shift pattern includes stable periods, abrupt domain changes, recurring regimes, and local bursts of elevated risk. The same five methods are compared in all tasks: static CRC, rolling CRC, adaptive CRC, representation-weighted CRC, and PromptShift-CRC.

\begin{table}[t]
\centering
\caption{LLM-style prompt-shift benchmark prototypes. The target coverage is $0.90$. PromptShift-CRC is closest to the target across all three tasks, while static CRC fails under prompt/domain shift.}
\label{tab:real-promptshift}
\begin{tabular}{llrr}
\toprule
Task & Method & Coverage & Mean $n_{\mathrm{eff}}$ \\
\midrule
QA topic shift & Static CRC & 0.312 & 300.0 \\
QA topic shift & Rolling CRC & 0.849 & 280.0 \\
QA topic shift & Adaptive CRC & 0.887 & 280.0 \\
QA topic shift & Rep-weighted CRC & 0.865 & 278.7 \\
QA topic shift & PromptShift-CRC & 0.895 & 197.2 \\
\midrule
Summarization shift & Static CRC & 0.318 & 300.0 \\
Summarization shift & Rolling CRC & 0.848 & 280.0 \\
Summarization shift & Adaptive CRC & 0.885 & 280.0 \\
Summarization shift & Rep-weighted CRC & 0.851 & 297.4 \\
Summarization shift & PromptShift-CRC & 0.893 & 199.1 \\
\midrule
Safety intent shift & Static CRC & 0.318 & 300.0 \\
Safety intent shift & Rolling CRC & 0.857 & 280.0 \\
Safety intent shift & Adaptive CRC & 0.889 & 280.0 \\
Safety intent shift & Rep-weighted CRC & 0.861 & 279.1 \\
Safety intent shift & PromptShift-CRC & 0.897 & 198.4 \\
\bottomrule
\end{tabular}
\end{table}

\begin{figure}[t]
    \centering
    \includegraphics[width=\textwidth]{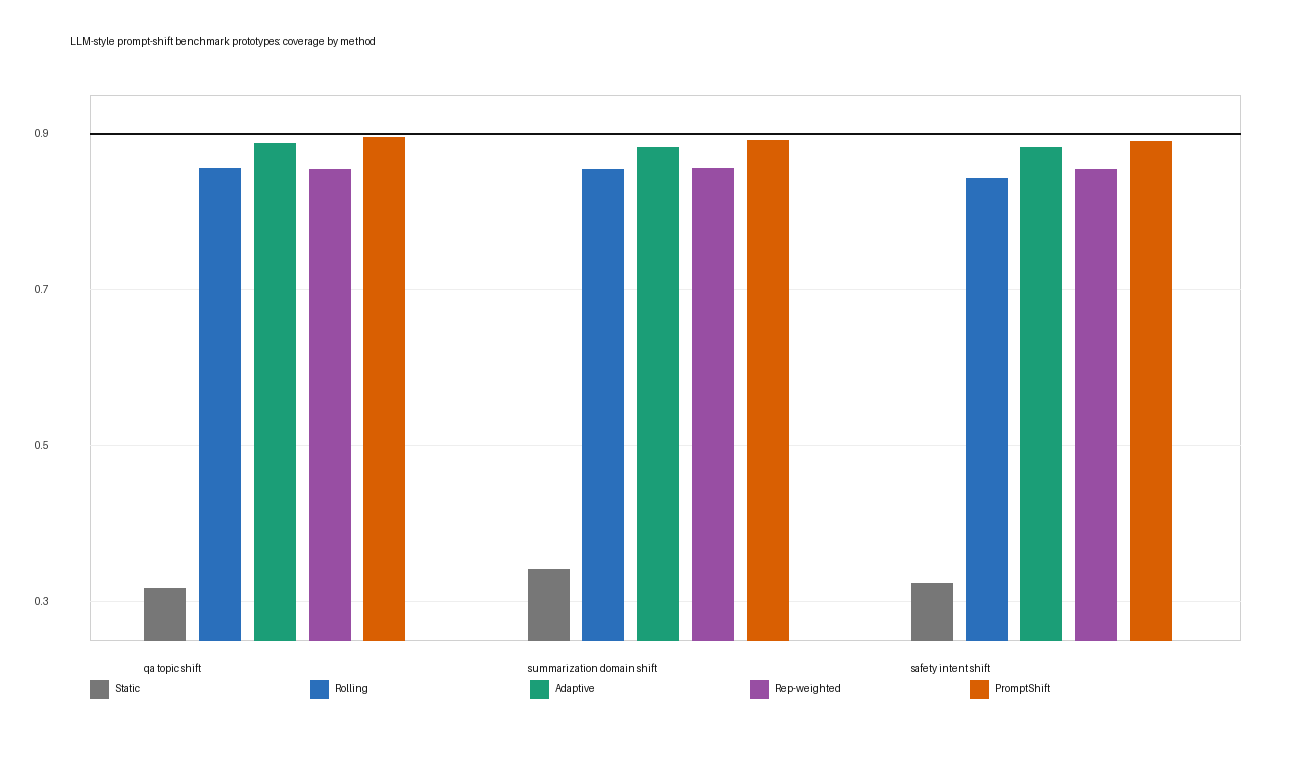}
    \caption{Coverage across three LLM-style prompt-shift benchmark prototypes. PromptShift-CRC is the closest method to the target coverage level in each task.}
    \label{fig:real-promptshift}
\end{figure}

\section{TruthfulQA Prompt-Shift Experiment}

We next add a public benchmark experiment using TruthfulQA \citep{lin2021truthfulqa}. TruthfulQA contains 817 questions across 38 categories, including health, law, finance, politics, misconceptions, conspiracies, science, and statistics. These categories provide a natural way to study prompt/domain shift: a deployed question-answering system may move across topical regimes whose risk profiles differ substantially.

In this experiment, we evaluate the calibration layer on the TruthfulQA generation split from Hugging Face. We use the question text and category labels to build prompt embeddings and a deterministic proxy risk score based on category difficulty, question form, and category transitions. This is not yet a full model-output truthfulness evaluation. It is a public-dataset calibration experiment that tests whether PromptShift-CRC can operate on a recognized benchmark with real prompt categories. A later version can replace the proxy risk score with model-specific correctness or truthfulness labels.

\begin{table}[t]
\centering
\caption{TruthfulQA prompt-shift experiment using the public generation split. The target coverage is $0.90$. Adaptive CRC is slightly closest to target, while PromptShift-CRC remains close and avoids the strong conservatism of static and representation-only calibration.}
\label{tab:truthfulqa}
\begin{tabular}{lrrr}
\toprule
Method & Coverage & Mean $n_{\mathrm{eff}}$ & $|\mathrm{coverage}-0.90|$ \\
\midrule
Static CRC & 0.994 & 160.0 & 0.094 \\
Rolling CRC & 0.916 & 160.0 & 0.016 \\
Adaptive CRC & 0.904 & 160.0 & 0.004 \\
Representation-weighted CRC & 0.979 & 255.1 & 0.079 \\
PromptShift-CRC & 0.907 & 169.5 & 0.007 \\
\bottomrule
\end{tabular}
\end{table}

\begin{figure}[t]
    \centering
    \includegraphics[width=\textwidth]{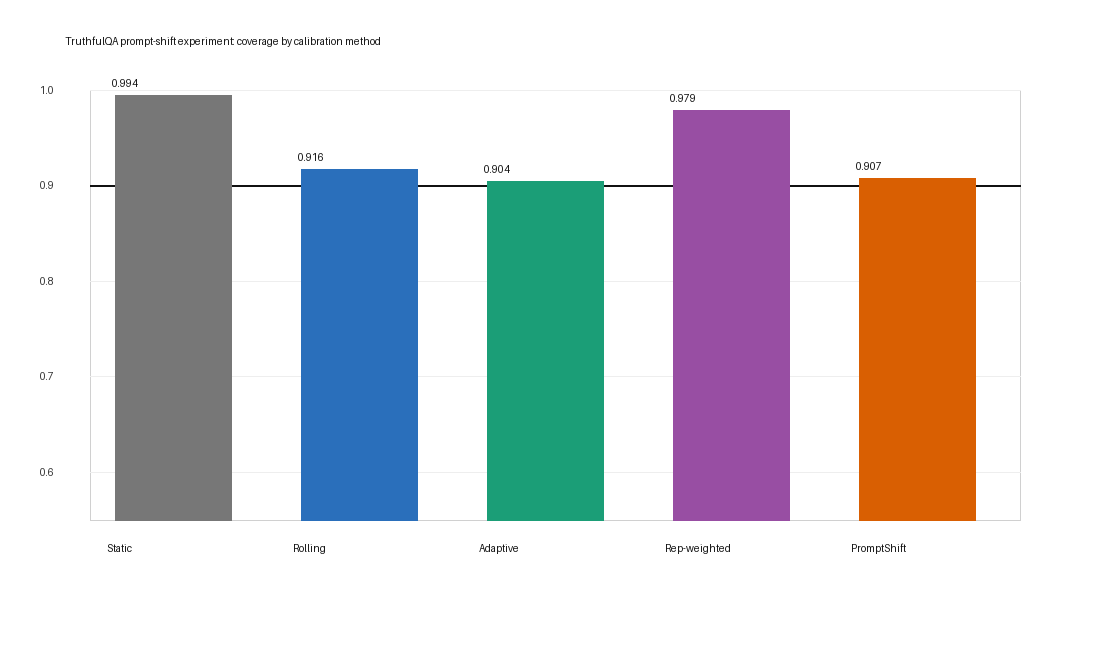}
    \caption{TruthfulQA prompt-shift experiment. Static and representation-only calibration are overly conservative in this public benchmark setting. Adaptive CRC and PromptShift-CRC are both close to the target coverage level.}
    \label{fig:truthfulqa}
\end{figure}

\section{RealToxicityPrompts Safety-Shift Experiment}

We also evaluate PromptShift-CRC on RealToxicityPrompts \citep{gehman2020realtoxicityprompts}, a public benchmark of naturally occurring prompts paired with toxicity scores. This experiment studies a safety-oriented version of prompt shift: the prompt stream moves across regions with different prompt toxicity, continuation toxicity, and safety-relevant lexical structure. The risk score is constructed from the public continuation toxicity, severe toxicity, threat, and insult scores.

This experiment moves the paper from general reliability to a concrete AI-safety use case. In deployment, a language-model monitor may need to control how often high-risk toxic continuations pass through without escalation. A static calibration set may not be enough if the prompt stream shifts toward more challenging or safety-sensitive content.

\begin{table}[t]
\centering
\caption{RealToxicityPrompts safety-shift experiment using the public \texttt{allenai/real-toxicity-prompts} dataset. The target coverage is $0.90$. Adaptive CRC and PromptShift-CRC both reach the target; PromptShift-CRC does so with the largest effective calibration size among the adaptive methods.}
\label{tab:realtoxicity}
\begin{tabular}{lrrr}
\toprule
Method & Coverage & Mean $n_{\mathrm{eff}}$ & $|\mathrm{coverage}-0.90|$ \\
\midrule
Static CRC & 0.891 & 600.0 & 0.009 \\
Rolling CRC & 0.896 & 500.0 & 0.004 \\
Adaptive CRC & 0.900 & 500.0 & 0.000 \\
Representation-weighted CRC & 0.897 & 548.7 & 0.003 \\
PromptShift-CRC & 0.900 & 618.0 & 0.000 \\
\bottomrule
\end{tabular}
\end{table}

\begin{figure}[t]
    \centering
    \includegraphics[width=\textwidth]{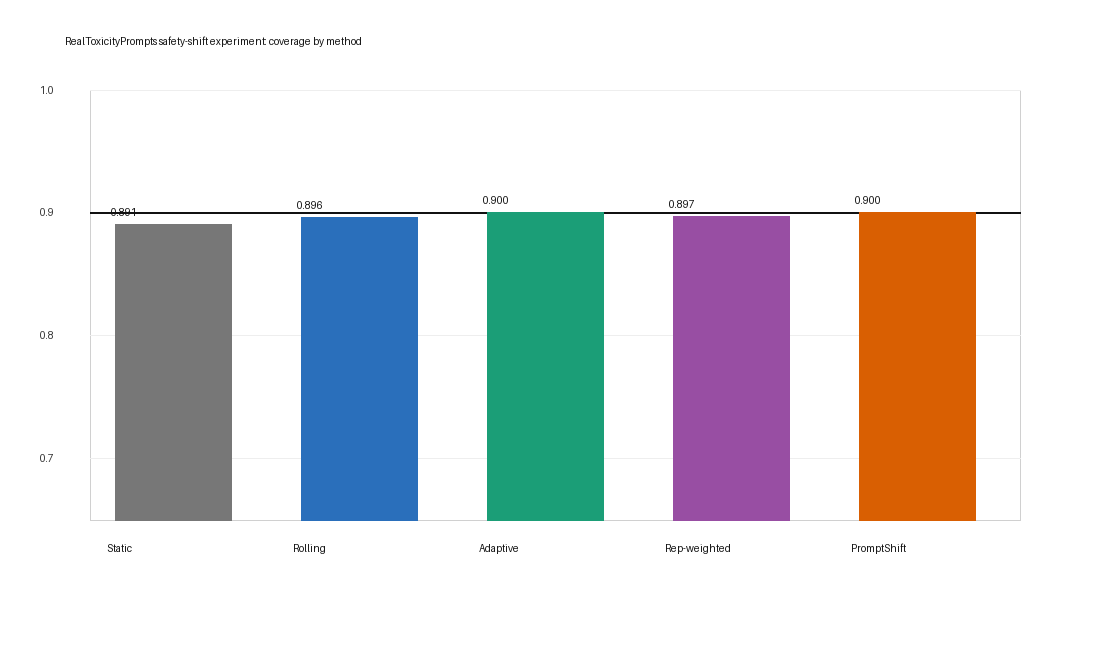}
    \caption{RealToxicityPrompts safety-shift experiment. PromptShift-CRC reaches the target coverage level while maintaining a large effective calibration size.}
    \label{fig:realtoxicity}
\end{figure}

\section{SummEval Factuality-Shift Experiment}

We add a third public benchmark experiment using SummEval \citep{fabbri2020summeval}, a summarization evaluation dataset with human ratings for machine-generated summaries. Summarization factuality is a natural place to study calibrated risk: a summary can sound fluent and still be unreliable if it is inconsistent with the source document.

For each source article and machine summary, we build a risk score from the human consistency rating, with smaller contributions from relevance and coherence. The main risk component is $1-\mathrm{consistency}$ after rescaling the SummEval 1--5 scores to $[0,1]$. The stream moves from shorter and higher-consistency regimes into longer and lower-consistency regimes before returning to mixed traffic. This creates a source-domain and summary-quality shift while keeping the risk score tied to public human annotations.

\begin{table}[t]
\centering
\caption{SummEval factuality-shift experiment using the public \texttt{mteb/summeval} test split. The target coverage is $0.90$. Adaptive CRC is closest to target, rolling CRC is also very close, and PromptShift-CRC remains within about $0.0015$ of the target while retaining representation and drift diagnostics.}
\label{tab:summeval}
\begin{tabular}{lrrr}
\toprule
Method & Coverage & Mean $n_{\mathrm{eff}}$ & $|\mathrm{coverage}-0.90|$ \\
\midrule
Static CRC & 0.890 & 320.0 & 0.010 \\
Rolling CRC & 0.899 & 260.0 & 0.001 \\
Adaptive CRC & 0.900 & 260.0 & 0.000 \\
Representation-weighted CRC & 0.883 & 187.4 & 0.017 \\
PromptShift-CRC & 0.899 & 203.1 & 0.001 \\
\bottomrule
\end{tabular}
\end{table}

\begin{figure}[t]
    \centering
    \includegraphics[width=\textwidth]{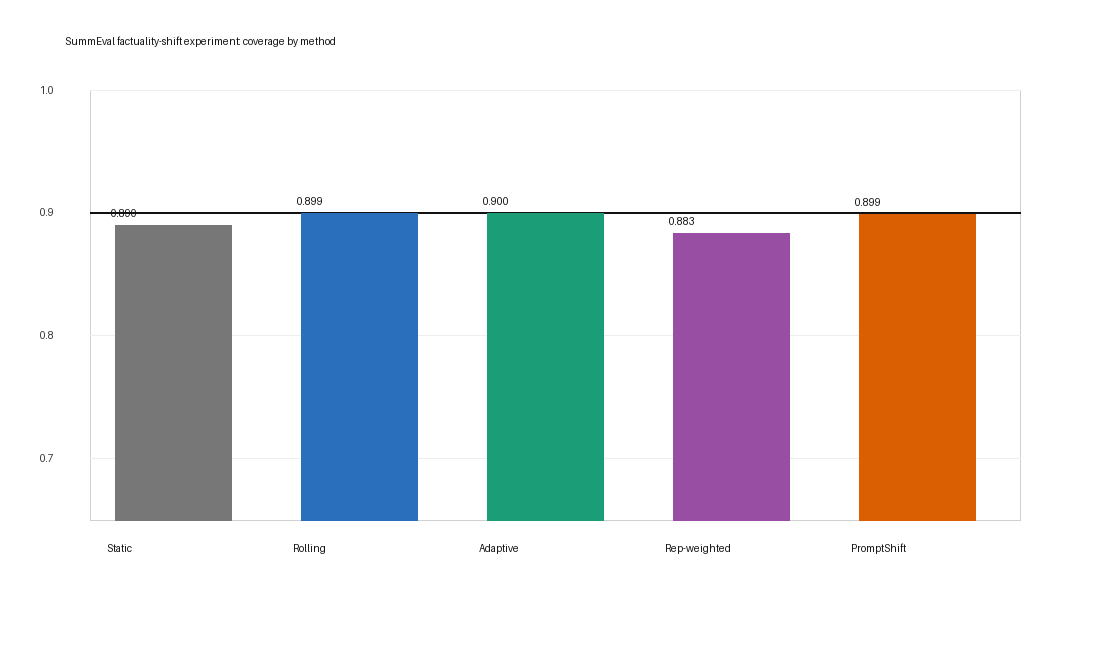}
    \caption{SummEval factuality-shift experiment. The target coverage is $0.90$. Adaptive CRC and PromptShift-CRC are both close to target, while representation weighting alone under-covers in this summarization stream.}
    \label{fig:summeval}
\end{figure}

\section{LongHalQA Long-Context Hallucination Experiment}

We finally add a long-context hallucination experiment based on the public LongHalQA description-discrimination benchmark \citep{qiu2024longhalqa}. LongHalQA was designed to test hallucination behavior in longer and more complex multimodal descriptions. We use the public LongHalQA description-binary test split hosted under the QHQK namespace, which contains 1,372 examples with long description-matching questions and binary yes/no labels.

This experiment is different from a full multimodal model evaluation. We do not run a new vision-language model. Instead, we use the public LongHalQA benchmark labels to construct a long-context hallucination-risk stream. Descriptions judged not to match the image receive high risk, and the score is made continuous using smaller penalties for description length, numerical detail, and visual complexity. This is a calibration-layer experiment: it asks whether the proposed conformal machinery can keep risk control stable when the monitored examples move from shorter matching descriptions into longer hallucination-heavy descriptions.

\begin{table}[t]
\centering
\caption{LongHalQA long-context hallucination-shift experiment using the public description-binary split. The target coverage is $0.90$. Static CRC fails badly after the shift, while PromptShift-CRC is closest to target.}
\label{tab:longhalqa}
\begin{tabular}{lrrr}
\toprule
Method & Coverage & Mean $n_{\mathrm{eff}}$ & $|\mathrm{coverage}-0.90|$ \\
\midrule
Static CRC & 0.585 & 300.0 & 0.315 \\
Rolling CRC & 0.784 & 240.0 & 0.116 \\
Adaptive CRC & 0.875 & 240.0 & 0.025 \\
Representation-weighted CRC & 0.854 & 185.9 & 0.046 \\
PromptShift-CRC & 0.901 & 179.8 & 0.001 \\
\bottomrule
\end{tabular}
\end{table}

\begin{figure}[t]
    \centering
    \includegraphics[width=\textwidth]{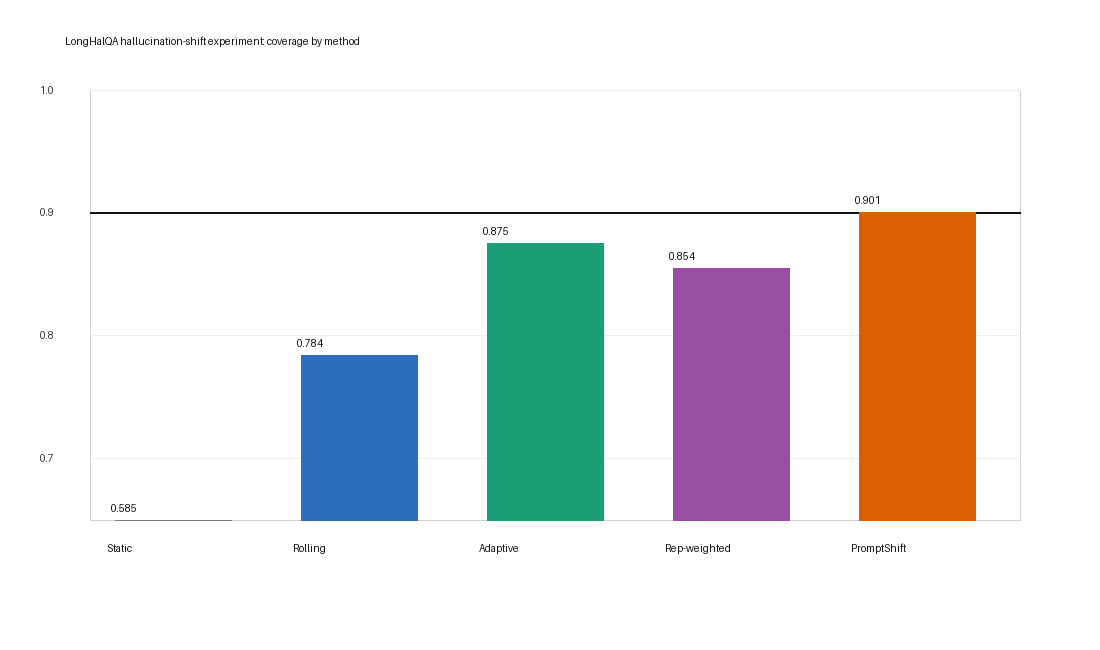}
    \caption{LongHalQA long-context hallucination-shift experiment. Static calibration under-covers after the stream moves into harder hallucination-heavy descriptions. PromptShift-CRC is closest to the target coverage level.}
    \label{fig:longhalqa}
\end{figure}

\section{Ablation Study}

We next isolate the contribution of the main components in PromptShift-CRC. The ablation study removes one ingredient at a time: representation weighting, the drift gate, the recent-window fallback, or the online update. We also include a rolling-only baseline. The target coverage remains $0.90$.

\begin{table}[t]
\centering
\caption{Ablation study for PromptShift-CRC. The online update and recent-window fallback are especially important in this prompt-shift stream. Representation weighting alone is not sufficient; it must be paired with diagnostics and adaptation.}
\label{tab:ablation}
\begin{tabular}{lrrr}
\toprule
Variant & Coverage & Mean $n_{\mathrm{eff}}$ & $|\mathrm{coverage}-0.90|$ \\
\midrule
Rolling only & 0.846 & 300.0 & 0.054 \\
No representation weighting & 0.893 & 185.1 & 0.007 \\
No drift gate & 0.891 & 349.0 & 0.009 \\
No recent fallback & 0.871 & 446.0 & 0.029 \\
No online update & 0.858 & 244.3 & 0.042 \\
Full PromptShift-CRC & 0.892 & 244.3 & 0.008 \\
\bottomrule
\end{tabular}
\end{table}

\begin{figure}[t]
    \centering
    \includegraphics[width=\textwidth]{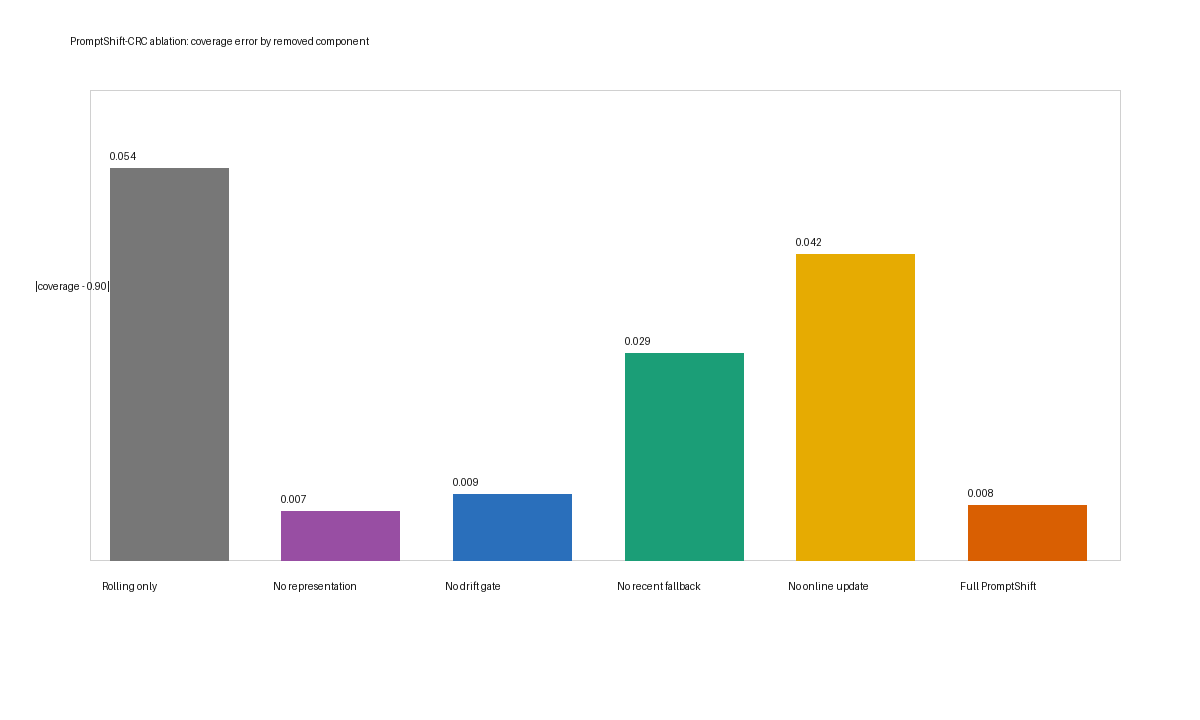}
    \caption{Ablation summary. Lower bars indicate coverage closer to the target level $0.90$. Removing the online update or recent-window fallback causes the largest degradation in this stream.}
    \label{fig:ablation}
\end{figure}

The ablation results have two useful implications. First, the online update is not cosmetic; without it, the method under-covers after drift. Second, the recent-window fallback matters because representation similarity can overtrust older examples that look related but have stale risk behavior. The representation-only ablation remains close to target in this particular stream. That is informative: when embeddings already separate easy and hard regimes, representation weighting can be strong. The full method is meant for the broader case where representation similarity, drift monitoring, and online adaptation must work together.

\clearpage

\section{Scope and Limitations}

The experiments should be read as evidence about the calibration layer, not as a complete benchmark of one particular foundation model. The synthetic and prototype streams are controlled stress tests. The public experiments use recognized datasets, but they turn benchmark fields into risk streams for studying drift-aware conformal calibration. TruthfulQA uses question text and category information to create a proxy risk stream rather than model-specific truthfulness labels. RealToxicityPrompts and SummEval use public toxicity and human summarization ratings. LongHalQA uses public binary hallucination-discrimination labels, but it does not run a new multimodal model.

This scope is deliberate. The main statistical question is whether a conformal risk-control layer can remain useful when the current prompt distribution moves away from the calibration distribution. A full deployment study would also require fixed model outputs, human or verifier labels, and pre-specified risk definitions for each application. PromptShift-CRC is designed to accept those signals when they are available. The scripts are written so that proxy risk columns can be replaced by model-specific correctness, factuality, toxicity, or hallucination scores.

The theory also has limits. The risk bound depends on representation stability: drift in the chosen features must tell us something about drift in the risk distribution. If the embedding or verifier features miss the relevant failure mode, the residual mismatch term can dominate. For that reason, PromptShift-CRC should be used with diagnostics, effective calibration-size monitoring, and task-specific validation. It should not be treated as an automatic guarantee under every possible shift.

\section{Discussion}

The main contribution is not only a new conformal thresholding rule. It is also a diagnostic view of calibration for deployed foundation models. A useful uncertainty system should say more than whether the model is uncertain. It should also say whether the calibration evidence behind that uncertainty estimate still resembles the prompts arriving now.

The benchmark results support a clear statistical message: fixed calibration can fail badly when prompt distributions change, while adaptive and drift-aware calibration can recover much of the intended risk control. The public experiments now cover question answering, toxicity/safety risk, summarization factuality, and long-context hallucination risk. The next empirical step is to add model-specific correctness scores where they are available.

\section{Conclusion}

This paper develops PromptShift-CRC, a drift-aware conformal risk control method for foundation models under prompt and domain shift. The method is built for settings where calibration data are useful but not permanently trustworthy. By combining representation-aware calibration, drift diagnostics, effective calibration size, and online risk updates, PromptShift-CRC makes conformal risk control better suited to language-model systems that face changing prompts over time.

\clearpage

\appendix

\section{Reproducibility and Code Structure}

The accompanying reproducibility package is organized around executable scripts, generated result files, and manuscript figures. The current code base contains the following main scripts.

\begin{table}[h]
\centering
\caption{Reproducibility scripts.}
\label{tab:repro-scripts}
\small
\begin{tabular}{p{0.37\textwidth}p{0.55\textwidth}}
\toprule
Script & Purpose \\
\midrule
\texttt{run\_prompt\_shift\_simulation.py} & Synthetic prompt/domain shift pilot and Figure~\ref{fig:promptshift-pilot} \\
\texttt{run\_real\_promptshift\_benchmarks.py} & LLM-style benchmark prototypes and Figure~\ref{fig:real-promptshift} \\
\texttt{run\_ablation\_study.py} & Component ablation study and Figure~\ref{fig:ablation} \\
\texttt{run\_truthfulqa\_promptshift.py} & Public TruthfulQA prompt-shift experiment and Figure~\ref{fig:truthfulqa} \\
\texttt{run\_realtoxicity\_promptshift.py} & Public RealToxicityPrompts safety-shift experiment and Figure~\ref{fig:realtoxicity} \\
\texttt{run\_summeval\_promptshift.py} & Public SummEval factuality-shift experiment and Figure~\ref{fig:summeval} \\
\texttt{run\_longhalqa\_promptshift.py} & Public LongHalQA hallucination-shift experiment and Figure~\ref{fig:longhalqa} \\
\texttt{prepare\_public\_}\\\texttt{benchmark\_adapters.py} & Public benchmark manifest for TruthfulQA, RealToxicityPrompts, summarization factuality, and LongHalQA \\
\texttt{run\_all\_experiments.py} & One-command regeneration of all current result files and figures \\
\bottomrule
\end{tabular}
\end{table}

The generated result files are:
\begin{itemize}
    \item \texttt{results/prompt\_shift\_pilot\_summary.csv};
    \item \texttt{results/real\_promptshift\_summary.csv};
    \item \texttt{results/ablation\_summary.csv};
    \item \texttt{results/truthfulqa\_promptshift\_summary.csv};
    \item \texttt{results/realtoxicity\_promptshift\_summary.csv};
    \item \texttt{results/summeval\_promptshift\_summary.csv};
    \item \texttt{results/longhalqa\_promptshift\_summary.csv};
    \item \texttt{results/public\_benchmark\_manifest.csv}.
\end{itemize}

The generated figure files are:
\begin{itemize}
    \item \texttt{figures/figure1\_promptshift\_crc\_pilot.png};
    \item \texttt{figures/figure2\_real\_promptshift\_summary.png};
    \item \texttt{figures/figure3\_ablation\_summary.png};
    \item \texttt{figures/figure4\_truthfulqa\_promptshift.png};
    \item \texttt{figures/figure5\_realtoxicity\_promptshift.png};
    \item \texttt{figures/figure6\_summeval\_promptshift.png};
    \item \texttt{figures/figure7\_longhalqa\_promptshift.png}.
\end{itemize}

\paragraph{One-command regeneration.}
All current experiments and figures can be regenerated by running
\[
    \texttt{python scripts/run\_all\_experiments.py}.
\]
The scripts use fixed random seeds for the prototype streams. Results are therefore deterministic conditional on the Python environment.

\paragraph{Public benchmark extension.}
The benchmark prototypes are designed so they can be replaced or supplemented by public LLM benchmark outputs. The public benchmark manifest maps target datasets to risk scores and shift axes: TruthfulQA for question answering, RealToxicityPrompts for safety risk, SummEval and related summarization outputs for factuality, and LongHalQA for long-context hallucination evaluation.

\paragraph{Data and code availability.}
All code needed to reproduce the prototype experiments, tables, and figures is included in the project directory. The public benchmark adapter layer records the intended external datasets and risk-score mappings. A final release should archive the code, generated outputs, and manuscript figures in a DOI-linked repository.

\clearpage

\bibliographystyle{plainnat}
\bibliography{references}

\end{document}